\documentclass{article}

\usepackage{arxiv}

\usepackage[utf8]{inputenc} % allow utf-8 input
\usepackage[T1]{fontenc}    % use 8-bit T1 fonts
\usepackage{hyperref}       % hyperlinks
\usepackage{url}            % simple URL typesetting
\usepackage{booktabs}       % professional-quality tables
\usepackage{amsfonts}       % blackboard math symbols
\usepackage{nicefrac}       % compact symbols for 1/2, etc.
\usepackage{microtype}      % microtypography
\usepackage{lipsum}		% Can be removed after putting your text content
\usepackage{graphicx}
\usepackage{natbib}
\usepackage{doi}
\usepackage{amsthm}
\usepackage{amsmath}
\newtheorem{assumption}{Assumption}
\usepackage{amsfonts}
\newtheorem{lemma}{Lemma}
\newtheorem{proposition}{Proposition}
\newtheorem{corollary}{Corollary}
\newtheorem{theorem}{Theorem}
\usepackage{amssymb}

\title{Fairness Under Prediction Error in Predict-Then-Optimize Pipelines}

\author{ \href{https://orcid.org/0009-0005-4873-7719}{\includegraphics[scale=0.06]{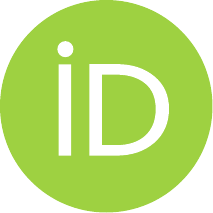}\hspace{1mm}Hilda Adwubi Osei} \\
	Department of Industrial Engineering\\
	Kwame Nkrumah University of Science and Technology\\
	  Kumasi, Ghana \\
	\texttt{hildaosei109@gmail.com} \\
}

\date{}

\renewcommand{\headeright}{Preprint}
\renewcommand{\undertitle}{Preprint}
\renewcommand{\shorttitle}{\textit{predict-then-optimize} }

\hypersetup{
pdftitle={Fairness Under Prediction Error in Predict-Then-Optimize Pipelines},
pdfsubject={cs.LG, cs.CY},
pdfauthor={Hilda A., Osei},
pdfkeywords={fairness, demographic parity, predict-then-optimize, optimization, prediction error},
}

\begin{document}
\maketitle

\begin{abstract}

	Fairness constraints in predict-then-optimize pipelines are necessarily enforced on predicted outcomes $\hat y$, since true outcomes $y$ are unobserved at decision time. We study whether a demographic parity constraint enforced this way continues to hold against $y$, and how the resulting violation scales with prediction
	error. Using a budget-constrained allocation problem solved as a linear program, we show empirically that the true-outcome fairness gap grows with prediction error $\sigma$ and saturates, well fit by
	$c(1-e^{-\sigma/\tau})$ with $c=0.1545\pm0.0103$ and $\tau=3.9045\pm0.3745$; its mean is invariant to sample size while its variance decays at the standard $O(1/\sqrt n)$ rate. We derive a
	two-part theoretical bound: a closed-form local sensitivity result
	showing the gap's first-order response to prediction error depends on
	only two individuals per group, validated against the solver to within three significant figures; and a universal, error-independent
	ceiling following from the boundedness of the decision variable, confirmed consistent with but looser than the empirical saturation
	level. On US Census income data, enforcing demographic parity on a trained predictor's output reduces an unconstrained disparity in true
	income to roughly one-third of its original size, leaving $34.4\%\pm3.3\%$ (California) and $35.7\%\pm1.8\%$ (Texas) of the
	disparity intact and undetectable from the predictions alone -- consistent not only in mechanism but in approximate magnitude across
	two states with different demographics and income distributions, and stable across ten independent train/test splits in each. A fairness constraint that appears fully satisfied at decision time can mask a substantial, systematic violation of the outcome it was meant to protect.

\end{abstract}

\keywords{fairness \and demographic parity \and predict-then-optimize \and optimization \and prediction error}

\section{Introduction}
\label{sec:introduction}

A prediction $\hat y$ that feeds a downstream decision is not evaluated on its own terms. In predict-then-optimize pipelines -- allocating  a budget across loan applicants, prioritizing recipients of a limited intervention, and distributing aid across a population -- a model's output is consumed by an optimizer that chooses an action before the true outcome $y$ is observed. Any fairness constraint placed on that optimizer is therefore necessarily a constraint on $\hat y$, not on $y$: the quantity a practitioner can certify as fair at decision time is not the quantity the fairness guarantee is meant to protect.

This study asks whether a demographic parity constraint enforced on $\hat y$ continues to hold when evaluated against $y$ and characterizes the gap between the two as a function of the prediction error. We study this in a budget-constrained resource-allocation problem solved as a linear program with demographic parity imposed on per-capita benefit at decision time.

Our contributions are as follows.

\textbf{(1) Empirical characterization.} Using synthetic data with controllable prediction error $\sigma$, we show that the true-outcome fairness gap grows with $\sigma$ and saturates toward a fixed ceiling rather than increasing without bound, well fit by $c \cdot (1 - e^{-\sigma/\tau})$ across more than an order of magnitude of $\sigma$ (Section~\ref{sec:empirical}). We further show that the gap's mean is approximately invariant to the sample size $n$, while its variance across draws shrinks at the standard $O(1/\sqrt{n})$ rate -- indicating a systematic bias that additional data does not correct, distinct from ordinary estimation noise.

\textbf{(2) A two-part theoretical bound.} We derive a structural result (Section~\ref{sec:theory}): at any optimum of this allocation problem, at most two individuals per group have a fractional allocation, and the gap's first-order sensitivity to prediction error depends only on these two ``marginal'' individuals -- every other individual's prediction error has zero local effect. This reduces the local sensitivity to a closed-form expression, which we validate numerically against the solver to within three significant figures. We pair this local result with a universal, $\sigma$-independent ceiling that follows directly from the boundedness of the decision variable and holds for arbitrarily large prediction error, and we show that it is consistent with, though looser than, the empirically observed saturation level.

\textbf{(3) Validation on real data.} We confirm the mechanism using a trained income predictor on US Census data (\texttt{ACSIncome}; \citet{ding2021retiring}) (Section~\ref{sec:real-data}). Enforcing demographic parity on the predictor's output reduces the unconstrained disparity in true income by approximately two-thirds but leaves the remainder intact and undetectable from the predictions alone, a result stable across ten independent train/test splits and consistent in both mechanism and approximate magnitude across two states with different demographics (California and Texas).

\textbf{Scope.} We isolate the \emph{estimation error} -- predictions drawn from the same distribution as the true outcome, corrupted by noise -- as the source of prediction imperfection. We do not study distribution shift, in which the feature-outcome relationship itself changes between training and deployment (Section~\ref{sec:discussion}). We restrict our attention to demographic parity on continuous per-capita benefit; other fairness definitions, such as equal opportunity, are not addressed here and may exhibit different behavior under the same mechanism.

Section~\ref{sec:related-work} positions this work relative to existing literature on fairness in predict-then-optimize pipelines. Section~\ref{sec:setup} formalizes the decision problem and fairness gap. Sections~\ref{sec:empirical}--\ref{sec:validation} present the empirical study, theoretical bound, and their comparison. Section~\ref{sec:real-data} presents the real-data experiment. Section~\ref{sec:discussion} discusses limitations and future work, and Section~\ref{sec:conclusion} concludes.

\section{Related Work}
\label{sec:related-work}

This work sits at the intersection of predict-then-optimize decision-making and algorithmic fairness. We organize related work by the question each line addresses and state after each how it differs from ours.

\textbf{Decision-focused learning.} \citet{elmachtoub2022smart} formalize predict-then-optimize as a learning problem and propose training predictors to minimize downstream decision regret rather than prediction error alone. \citet{wilder2019melding} extend this idea to combinatorial optimization problems, differentiating through the optimization step during training so the predictor is shaped by the decisions it will induce. Neither considers a fairness constraint on the resulting decisions; both aim to make the decision \emph{better}, not fairer. We take a trained predictor's error as given, rather than optimizing against it, and ask a narrower question: holding the predictor fixed, how does its error propagate into a fairness violation.

\textbf{Fairness in predict-then-optimize pipelines.} Closest to this work, \citet{wang2026endtoendfairnessoptimizationfair} introduced \emph{decision regret} as the gap between a fairness objective evaluated on a prediction-based decision and on the full-information (true-outcome) decision -- the same quantity we study. Their method, fair decision-focused learning, minimizes this gap during training: predictions are optimized jointly for accuracy and fairness of the decisions they induce, using a differentiable relaxation of the downstream optimization problem. This requires retraining the predictor and assumes that the fairness objective is available as a training signal. We do not propose a training method; instead, we characterize how this gap behaves as a function of prediction error \emph{alone}, for a fixed, already trained predictor, via a closed-form local sensitivity result and a universal bound requiring no retraining or differentiable relaxation to apply. Our results could determine, prior to any retraining, whether a given predictor's error already makes their training-time correction necessary.

\textbf{Fairness in budget-constrained decision-making.}
\citet{corbettdavies2017cost} study fairness constraints in threshold-based decisions under a budget (e.g.\ a limited number of interventions), and show that imposing standard fairness criteria can trade off against the decision-maker's objective -- the ``cost of fairness.'' Their analysis takes the risk score (our $\hat y$) as exact; it does not consider how error in that score affects whether the fairness criterion holds against the true outcome. Our budget-constrained allocation problem is structurally similar to theirs, but our question -- does the constraint survive contact with $y$ -- is orthogonal to the cost-of-fairness trade-off they study, which concerns $\hat y$ throughout.

\textbf{Fairness under finite-sample uncertainty.}
\citet{cousins2022uncertainty} bound how finite sample size distorts estimates of a fairness objective, where uncertainty arises from estimating group-level statistics from a limited number of individuals. This is structurally different from our setting: their uncertainty source is \emph{how many people are observed}, ours is \emph{how accurately each individual's outcome is predicted}. Consistent with this distinction, we find in Section~\ref{sec:empirical} that our fairness gap does not vanish as sample size grows, whereas sample-complexity bounds of this type generally do -- the two sources of uncertainty have different asymptotic behavior, not merely different derivations.

\textbf{Where fairness should be enforced.}
\citet{devos2025decisioncentricfairnessevaluationoptimization} argue that fairness should be measured in the region of the score distribution where decisions actually change, rather than globally. This is a question of \emph{where} to measure fairness; our question is, given a fixed, sensible place to enforce it (at decision time, on the available prediction), \emph{whether} the resulting guarantee survives contact with the true outcome. The two questions are complementary rather than competing.

\textbf{Demographic parity.} We adopt the demographic parity criterion of
\citet{dwork2012fairness}, applied here to a continuous per-capita benefit rather than a binary classification rate. We do not consider alternative criteria such as equal opportunity; Section~\ref{sec:discussion} discusses this as a direction for future work.

\textbf{Summary.} No existing work, to our knowledge, derives an explicit bound relating a true-outcome fairness violation to prediction error and sample size for a demographic-parity-constrained allocation problem, built from first principles (LP sensitivity and duality) and validated against both a controlled synthetic study and real data. This is the gap this paper addresses.

\section{Problem Setup}
\label{sec:setup}

\subsection{Notation}
\label{subsec:notation}

Table~\ref{tab:notation} summarizes the notation used throughout the
paper. Once introduced here, each symbol retains its meaning in every
subsequent section.

\begin{table}[h]
	\centering
	\caption{Notation used throughout the paper.}
	\label{tab:notation}
	\begin{tabular}{ll}
		\toprule
		Symbol & Meaning \\
		\midrule
		$n$                    & Population size \\
		$a_i \in \{0,1\}$      & Group membership of individual $i$ \\
		$G_0, G_1$             & The two groups, $G_g = \{i : a_i = g\}$ \\
		$n_0, n_1$             & Group sizes, $n_g = |G_g|$ \\
		$y_i \in \mathbb{R}$   & True outcome of individual $i$ \\
		$\hat y_i \in \mathbb{R}$ & Predicted outcome of individual $i$ \\
		$z_i \in [0,1]$        & Allocation decision for individual $i$ \\
		$B$                    & Total budget, $B = \beta n$ for budget fraction $\beta \in (0,1)$ \\
		$\sigma$               & Prediction error scale (synthetic study only) \\
		$\mathrm{gap}(v, z)$   & Fairness gap, evaluated at value vector $v$ and allocation $z$ \\
		\bottomrule
	\end{tabular}
\end{table}

\subsection{Assumptions}
\label{subsec:assumptions}

\begin{assumption}[Two groups]
	\label{asm:two-groups}
	Each individual belongs to exactly one of two groups, $G_0$ or $G_1$,
	with $n_0, n_1 \geq 1$.
\end{assumption}

\begin{assumption}[Divisible allocation]
	\label{asm:divisible}
	The decision $z_i$ is a continuous fraction in $[0,1]$, not a binary
	selection. This is standard in resource-allocation settings where the
	allocated quantity (e.g.\ funding, service hours) is itself divisible.
\end{assumption}

\begin{assumption}[Prediction available, outcome unobserved at decision time]
	\label{asm:predict-then-optimize}
	At decision time, $\hat y_i$ is known for every $i$, and $y_i$ is not.
	Any constraint imposed on the decision can therefore be a function of
	$\hat y$, but not of $y$.
\end{assumption}

\subsection{The Allocation Problem}
\label{subsec:allocation-problem}

Given predictions $\hat y \in \mathbb{R}^n$ and group labels $a \in
\{0,1\}^n$, the decision-maker solves
\begin{equation}
	\label{eq:lp}
	\begin{aligned}
		\max_{z \in \mathbb{R}^n} \quad & \hat y^\top z \\
		\text{s.t.} \quad & 0 \leq z_i \leq 1 && \forall i, \\
		& \textstyle\sum_{i=1}^n z_i \leq B, \\
		& \mathrm{gap}(\hat y, z) \leq \epsilon, \quad \mathrm{gap}(\hat y, z) \geq -\epsilon,
	\end{aligned}
\end{equation}
where the fairness constraint is written as two one-sided inequalities rather than a single equality to match the implementation used throughout the paper; the two coincide exactly when $\epsilon = 0$, which we use throughout unless stated otherwise. Problem~\eqref{eq:lp} is a linear program: the objective and every constraint are linear in $z$.

\subsection{The Fairness Gap}
\label{subsec:gap}

For any value vector $v \in \mathbb{R}^n$ (either $\hat y$ or $y$) and allocation $z$, define the per-capita benefit of group $g$ as $\frac{1}{n_g}\sum_{i \in G_g} v_i z_i$, and the fairness gap as the difference between groups:
\begin{equation}
	\label{eq:gap}
	\mathrm{gap}(v, z) \;=\; \frac{1}{n_1}\sum_{i \in G_1} v_i z_i \;-\; \frac{1}{n_0}\sum_{i \in G_0} v_i z_i.
\end{equation}
This is linear in $z$ for fixed $v$, which is what makes it usable directly as an LP constraint in~\eqref{eq:lp}.

The central object of this paper is the distinction between two evaluations of~\eqref{eq:gap} at the solution $z^*(\hat y)$ of problem~\eqref{eq:lp}:
\begin{itemize}
	\item $\mathrm{gap}(\hat y, z^*(\hat y))$ -- the \emph{enforced} gap,
	constrained to lie in $[-\epsilon, \epsilon]$ by construction, and the
	only quantity observable at decision time (Assumption~\ref{asm:predict-then-optimize});
	\item $\mathrm{gap}(y, z^*(\hat y))$ -- the \emph{true} gap: the same
	allocation $z^*(\hat y)$, evaluated against the true outcomes once observed.
\end{itemize}
These coincide only when $\hat y = y$. The remainder of the paper studies
how large $|\mathrm{gap}(y, z^*(\hat y))|$ can be as a function of the
discrepancy between $\hat y$ and $y$, given that $\mathrm{gap}(\hat y, z^*(\hat y)) \approx 0$ by construction.

\subsection{A Minimal Example}
\label{subsec:minimal-example}

We anchor the formalism with the smallest nontrivial instance: $n=2$, one
individual per group ($n_0 = n_1 = 1$), budget $B=1$, and $\epsilon = 0$. Writing $\hat y_1, \hat y_2$ for the two individuals' predictions (group~0 and group~1 respectively), the fairness constraint $\mathrm{gap}(\hat y, z) = 0$ reduces to $\hat y_1 z_1 = \hat y_2 z_2$, and combined with the budget constraint $z_1 + z_2 = 1$ (binding, since increasing either $z_i$ only helps the objective), the unique solution is
\begin{equation}
	\label{eq:toy-solution}
	z_1^* = \frac{\hat y_2}{\hat y_1 + \hat y_2}, \qquad
	z_2^* = \frac{\hat y_1}{\hat y_1 + \hat y_2}.
\end{equation}
Substituting~\eqref{eq:toy-solution} into~\eqref{eq:gap} evaluated at the
true outcomes $y_1, y_2$ gives the true gap in closed form: \begin{equation}
	\label{eq:toy-true-gap}
	\mathrm{gap}(y, z^*(\hat y)) = \frac{y_1 \hat y_2 - y_2 \hat y_1}{\hat y_1 + \hat y_2}.
\end{equation}
Equation~\eqref{eq:toy-true-gap} is zero exactly when $\hat y = y$ (up to scale), confirming that the enforced and true gaps coincide only in the absence of prediction error, and is nonzero otherwise -- the simplest possible instance of the phenomenon this paper studies at general $n$. We refer to the two individuals appearing in~\eqref{eq:toy-solution}--\eqref{eq:toy-true-gap} as \emph{marginal}: Section~\ref{sec:theory} shows that for general $n$, the solution to problem~\eqref{eq:lp} reduces to exactly this two-variable structure, restricted to at most one marginal individual per group, with every other individual's allocation pinned to $0$ or $1$.

\section{Empirical Study}
\label{sec:empirical}

\subsection{Experimental Design}
\label{subsec:emp-design}

We generate synthetic populations with two groups of controllable size and outcome gap, and a controllable amount of prediction error, so that every quantity affecting the true-outcome fairness gap is under direct experimental control. For a population of size $n$, group membership $a_i$ is drawn Bernoulli with $P(a_i=1) = 0.3$ (group~1 the minority); true outcomes are drawn $y_i \sim \mathcal{N}(\mu_{a_i}, 0.5^2)$ with $\mu_0 = 2.0$, $\mu_1 = 1.4$, so that group~1 has strictly lower expected outcome and an unconstrained allocation is unfair by construction; predictions are generated as $\hat y_i = y_i + \sigma \cdot \varepsilon_i$ for $\varepsilon_i \sim \mathcal{N}(0,1)$, so $\sigma$ is the sole, directly controlled knob governing prediction error, with $\sigma=0$ recovering perfect prediction. For each generated population, we solve problem~\eqref{eq:lp} with $\epsilon=0$ and budget fraction $\beta=0.3$, and record $\mathrm{gap}(y, z^*(\hat y))$ -- the true gap, under the allocation that was constrained to satisfy parity only on $\hat y$.

Two quantities were varied across experiments: $\sigma$ to trace how the true gap depends on the prediction error, and $n$ to determine whether the gap is a finite-sample artifact or persists asymptotically. Every other generative parameter ($\mu_0, \mu_1$, group proportion, $\beta$) was held fixed throughout, so that the observed changes in the gap could be attributed to $\sigma$ and $n$ alone. Each reported configuration was repeated over multiple random seeds; we report the mean and standard error (SE) over seeds throughout, never a single run, because any individual draw of $a, y, \varepsilon$ is itself a random quantity.

\subsection{Dependence on Prediction Error}
\label{subsec:emp-sigma-sweep}

We first sweep $\sigma \in \{0, 0.2, 0.4, \ldots, 2.0\}$ at a fixed $n=2000$, with 20 seeds per value. The mean true gap grows monotonically from $\approx 0$ at $\sigma=0$ to $-0.077$ at $\sigma=2.0$, but the
per-step increase shrinks as $\sigma$ grows (first differences fall from $\approx 0.009$ near $\sigma \approx 0.3$ to $\approx 0.006$ near $\sigma = 2.0$), indicating concave, not linear, growth. This is structurally expected: $z \in [0,1]^n$ is bounded and budget-constrained (Assumption~\ref{asm:divisible}, Eq.~\ref{eq:lp}), so $\mathrm{gap}(y,z)$ is itself bounded regardless of how large $\sigma$ becomes, ruling out unbounded (e.g.\ linear) growth on structural grounds alone.

\subsection{Dependence on Sample Size}
\label{subsec:emp-n-sweep}

Next, we sweep $n \in \{500, 1000, 2000, 4000, 8000\}$ at three fixed values of $\sigma \in \{0.5, 1.0, 1.5\}$, with 20 seeds per configuration. The mean true gap appears to \emph{decrease} in magnitude from $n=2000$ to $n=8000$ at every tested $\sigma$ (e.g.\ at $\sigma=0.5$: $-0.0204 \to -0.0180 \to -0.0154$ for $n = 2000, 4000, 8000$), suggesting the gap might vanish as $n \to \infty$. We report this
intermediate result because it did not survive closer scrutiny: the standard error at $n=500$ ($\mathrm{SE} \approx 0.006$ at $\sigma=0.5$) is nearly as large as the mean itself ($-0.0042$), indicating 20 seeds is insufficient to resolve the small-$n$ behavior, and the apparent downward trend could reflect sampling noise rather than a real effect.

Repeating with 100 seeds per configuration resolves this. The mean true gap is now approximately \emph{constant} across $n$ at every tested $\sigma$ (e.g., at $\sigma=0.5$: $-0.0131, -0.0152, -0.0154, -0.0159, -0.0154$ for $n = 500, 1000, 2000, 4000, 8000$, each within 1--2 SE of the others), while the standard deviation across seeds shrinks systematically with $n$ (from $0.0241$ at $n=500$ to $0.0061$ at $n=8000$, a $\approx 3.95\times$ reduction for a $16\times$ increase in $n$, matching the $\sqrt{16}=4$ predicted by standard sampling theory). We interpret this as two distinct components of the true gap: a systematic bias that does not vanish with $n$, and ordinary $O(1/\sqrt n)$ sampling noise around it. We return to this distinction structurally in Section~\ref{sec:theory}.

\subsection{Functional Form}
\label{subsec:emp-functional-form}

Having established that the gap's dependence on $n$ is limited to variance, not mean, we fix $n=4000$ (where the standard deviation is already small) and focus sampling effort on resolving the $\sigma$ dependence precisely, using 100 seeds per value of $\sigma \in \{0, 0.2, \ldots, 3.0\}$. We compared five candidate functional forms using weighted least squares, using each point's standard error as the weight: linear ($c\sigma$), square root ($c\sqrt\sigma$), power law ($c\sigma^p$), rational saturating ($c\sigma/(k+\sigma)$), and exponential saturating ($c(1-e^{-\sigma/\tau})$). Over this range, the square-root form is decisively rejected (RMSE $10\times$ worse than the alternatives); the power-law fit returns $p \approx 0.97$, which is indistinguishable from linear; and while the saturating forms fit measurably better, their fitted saturation parameters ($k \approx 16$, $\tau \approx 8.3$) lie well outside the tested range of $\sigma$, meaning the data over $[0,3]$ cannot yet distinguish genuine saturation from a slowly bending curve that merely resembles it.

We therefore extend the sweep to $\sigma \in \{0,1,2,3,4,6,8,10,14,20\}$, with 60 seeds per value (sufficient to resolve the shape at this coarser resolution). Over this wider range, linear growth is decisively ruled out (RMSE $30\times$ worse than the best fit), and the exponential saturating form fits best, with parameters that now fall \emph{within} the tested range ($\tau \approx 3.95$, compared to $\tau \approx 8.3$ fitted from the shorter range alone) -- indicating the earlier, short-range fit was extrapolating, while this one is interpolating. Combining both sweeps into a single weighted fit gives
\begin{equation}
	\label{eq:fitted-curve}
	|\mathrm{gap}(y, z^*(\hat y))| \approx c\left(1 - e^{-\sigma/\tau}\right),
	\qquad c = 0.1545 \pm 0.0103, \quad \tau = 3.9045 \pm 0.3745.
\end{equation}
Figure~\ref{fig:gap-vs-sigma} shows the empirical means with standard error bars alongside the fitted curve of Eq.~\eqref{eq:fitted-curve}: the gap rises steeply and nearly linearly for $\sigma \lesssim 1$, then bends over and flattens toward a ceiling of $c \approx 0.155$ by $\sigma \approx 14$--$20$, with the fitted curve passing through the error bars at every tested point.

\begin{figure}[h]
	\centering
	\includegraphics[width=0.75\textwidth]{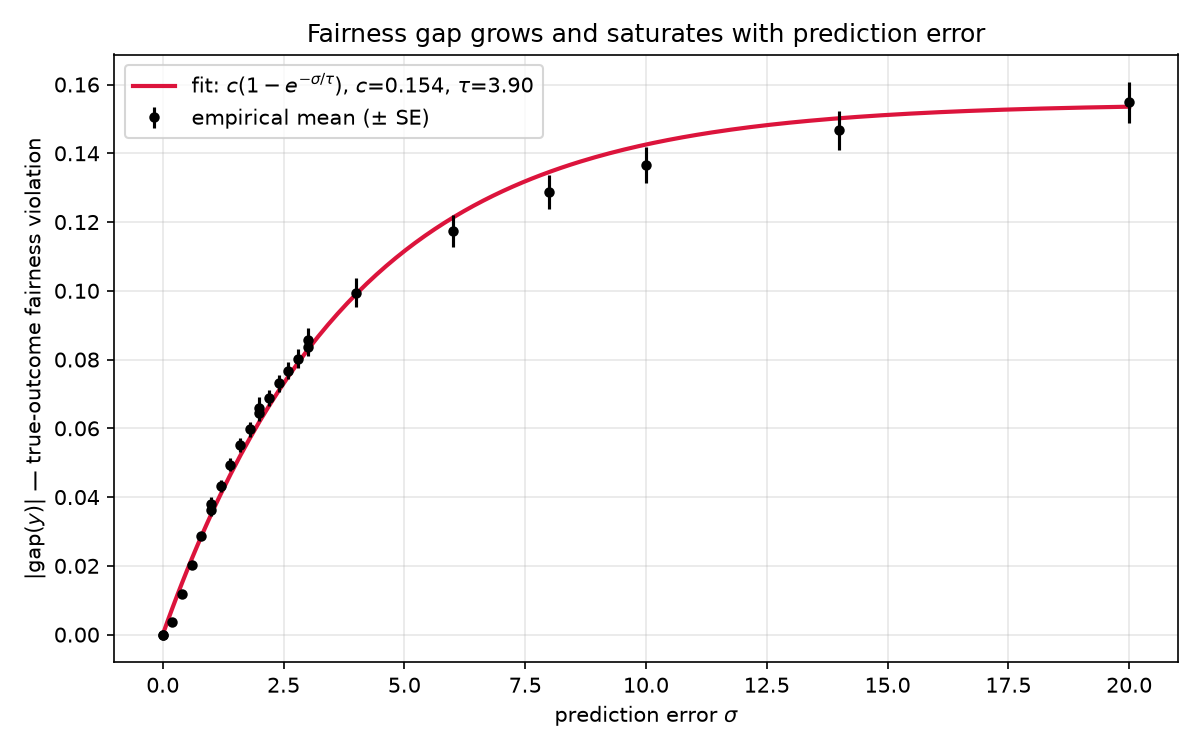}
	\caption{True-outcome fairness gap as a function of prediction error
	$\sigma$ ($n=4000$, 60--100 seeds per point, error bars show $\pm 1$
	SE), with the fitted curve of Eq.~\eqref{eq:fitted-curve}.}
	\label{fig:gap-vs-sigma}
\end{figure}

\subsection{Summary}
\label{subsec:emp-summary}

Two findings from this section are used directly in what follows. First, the true-outcome fairness gap is well described by the saturating form of Eq.~\eqref{eq:fitted-curve}, growing approximately linearly for small $\sigma$ and flattening to a finite ceiling for large $\sigma$ -- Section~\ref{sec:validation} compares this empirical shape against the theoretical bound derived in Section~\ref{sec:theory}. Second, the gap's mean is approximately invariant to sample size while its variance decays at the standard $O(1/\sqrt n)$ rate -- a signature Section~\ref{sec:theory} explains structurally by showing the gap's sensitivity to prediction error depends on only a bounded number of individuals regardless of $n$.

\section{Theoretical Analysis}
\label{sec:theory}

This section derives a theoretical characterization of the true-outcome gap in two parts: a structural result reducing the optimal allocation's dependence on $\hat y$ to at most two individuals per group (Section~\ref{subsec:thm-active-set}), from which a closed-form local sensitivity formula follows (Section~\ref{subsec:thm-local}), and a universal, $\sigma$-independent ceiling following from boundedness alone (Section~\ref{subsec:thm-ceiling}). Every derived expression in this section is validated numerically against the solver used in Section~\ref{sec:empirical}, with the validation stated immediately after each result.

\subsection{The Active Set Reduces to One Marginal Individual per Group}
\label{subsec:thm-active-set}

\begin{proposition}[Active-set reduction]
	\label{prop:active-set}
	Let $z^*$ be an optimal solution to problem~\eqref{eq:lp} with $\epsilon=0$, and suppose no two individuals within the same group share an identical predicted value $\hat y_i$. Then at most one individual in each group $G_g$ satisfies $0 < z^*_i < 1$; every other individual satisfies $z^*_i \in \{0,1\}$.
\end{proposition}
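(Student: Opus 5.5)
The plan is an exchange argument by contradiction. The starting observation is that problem~\eqref{eq:lp} with $\epsilon=0$ depends on $z$ only through the two group benefits $b_g(z)=\frac{1}{n_g}\sum_{i\in G_g}\hat y_i z_i$. The objective equals $n_1 b_1(z)+n_0 b_0(z)$, and the fairness constraint is $b_1(z)=b_0(z)$. The only coupling between the groups besides this equality is the shared budget $\sum_i z_i\le B$.

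Suppose an optimal $z^*$ has two fractional individuals $i,j$ in the same group $G_g$, and without loss of generality $\hat y_i>\hat y_j$ (they differ by hypothesis). I will work under the assumption that predictions are positive, as in the paper's setting where $\hat y$ is an outcome such as income; the case of nonpositive $\hat y$ is discussed at the end. The within-group move is
\[
z_i\mapsto z_i+t,\qquad z_j\mapsto z_j-t\,\hat y_i/\hat y_j ,
\]
for small $t>0$. This is feasible for the box constraints because both coordinates are strictly interior. It leaves $b_g$ unchanged, so both the objective and the gap are unchanged. It also strictly reduces the budget used, by $t(\hat y_i/\hat y_j-1)>0$.

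The second step spends the freed budget to strictly increase the objective without breaking parity. I increase $b_0$ and $b_1$ by the same small amount $\eta>0$. In group $g$ I raise the coordinate of $j$, which is still strictly below $1$. In the other group $G_{1-g}$ I raise any individual $k$ with $z^*_k<1$ and $\hat y_k>0$. This costs budget proportional to $\eta$, which for small $\eta$ is covered by the freed slack. The objective rises by $(n_0+n_1)\eta>0$, contradicting optimality, so at most one fractional individual per group remains.

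The main obstacle is the degenerate case in which this second step has no room, i.e.\ every individual in $G_{1-g}$ already has $z^*_k=1$. In that case $b_{1-g}$ is at its maximum, and parity pins $b_g$ to that fixed value. I would then argue directly within $G_g$: among allocations with $b_g$ fixed, total budget use is minimized by filling individuals greedily in decreasing order of $\hat y$. That is a fractional-knapsack structure, which leaves exactly one fractional item. So I must show the optimal $z^*$ is of this greedy form. If the budget is slack, however, the optimum may be non-unique, and the statement could fail for a non-vertex optimum. If I cannot close that case, I would restrict the claim to basic (vertex) optimal solutions. For a vertex the count is immediate: a basic solution of an LP with only two non-box constraints (budget and parity) has at most two fractional coordinates. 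The distinct-$\hat y$ hypothesis together with the exchange above then forces these two to lie in different groups.

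Nonpositive $\hat y$ would be handled similarly. Such individuals either receive $z^*_i=0$ or are used only to meet parity, and the same two-constraint vertex count applies to them.
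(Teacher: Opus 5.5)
Your two-step exchange is correct whenever the other group $G_{1-g}$ has room, i.e.\ contains some $k$ with $\hat y_k>0$ and $z^*_k<1$. It is also a better route than the paper's. The paper holds the unweighted total $S_g=\sum_{i\in G_g}z_i$ fixed, shifts mass from $j$ to $i$, and asserts that parity depends only on $(S_0,S_1)$. But the gap is $\hat y$-weighted, so that swap changes group $g$'s predicted per-capita benefit by $\delta(\hat y_i-\hat y_j)/n_g\neq 0$ and leaves the feasible set. Holding $b_g$ fixed and spending the freed budget on both groups, as you do, is what actually works.

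The gap is the degenerate case. It cannot be closed, because the proposition is false there, and your vertex fallback fails too. Take $G_0=\{a,b\}$ with $\hat y_a=3$, $\hat y_b=1$, $G_1=\{c\}$ with $\hat y_c=1$, and $B=2$. Parity reads $z_c=(3z_a+z_b)/2$, so the objective equals $3z_c\le 3$. The point $z^*=(\tfrac{1}{2},\tfrac{1}{2},1)$ is feasible (total allocation $2$, both per-capita benefits equal to $1$) and attains $3$. It is a vertex: the active constraints $z_c=1$, budget, and parity are linearly independent precisely because $\hat y_a\neq\hat y_b$. Yet $G_0$ has two fractional individuals.

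The budget binds here, so slackness is not the culprit. Saturation of $G_1$ pins the objective at its maximum. After that, every group-$0$ allocation attaining the required benefit within the leftover budget is optimal; the greedy point $(\tfrac{2}{3},0,1)$ is just one of them. Hence your claim that distinct predictions plus the exchange force a vertex's two fractional coordinates into different groups is wrong. The exchange needs room in the other group, which is exactly what this case lacks.

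What can be proved is one of two weaker statements. The first is the proposition under the added hypothesis that both groups have room. Under your positivity assumption this is implied by the hypothesis of Lemma~\ref{lem:marginal-closed-form} that both groups have a marginal individual, which is the only regime used later. The second is the existence of some optimal solution with at most one fractional individual per group.

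Separately, your exchange divides by $\hat y_j$, and the synthetic predictions $\hat y_i=y_i+\sigma\varepsilon_i$ are frequently negative. Nonpositive predictions need their own exchange. For example, lower $z_j$ and compensate by lowering a positively predicted individual in $G_g$ with $z_k>0$; this frees budget while keeping $b_g$ fixed. The group-blind vertex count does not address this case.
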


\begin{proof}
	Fix $g \in \{0,1\}$ and the total allocation to group $g$, $S_g = \sum_{i \in G_g} z_i$, at its value under $z^*$. Among all allocations within $G_g$ summing to $S_g$ with $0 \le z_i \le 1$, the one maximizing $\sum_{i \in G_g} \hat y_i z_i$ -- the only part of the objective in~\eqref{eq:lp} depending on $\{z_i\}_{i \in G_g}$ for fixed $S_g$ -- is obtained by fully funding the highest-$\hat y_i$
	individuals in $G_g$ first. Suppose instead $z^*_i, z^*_j \in (0,1)$ for two individuals $i,j \in G_g$ with $\hat y_i \neq \hat y_j$ (w.l.o.g.\ $\hat y_i > \hat y_j$). Let $\delta = \min(1-z^*_i, z^*_j) > 0$ and define $z'$ by $z'_i = z^*_i + \delta$, $z'_j = z^*_j - \delta$, $z'_k = z^*_k$ otherwise. Then $z'$ is feasible for problem~\eqref{eq:lp}: $S_g$ is unchanged, so both the budget constraint (which depends only on $\sum_i z_i$) and the fairness constraint (which depends only on $S_0, S_1$ through~\eqref{eq:gap}) are unaffected. The objective strictly increases, $\hat y^\top z' - \hat y^\top z^* = \delta(\hat y_i - \hat y_j) > 0$, contradicting optimality of $z^*$. Hence at most one individual per group is fractional.
\end{proof}

We write $m_g$ for the (at most one) marginal individual in group $g$ under $z^*$, when one exists, and refer to individuals with $z^*_i = 1$ as \emph{fully funded} and those with $z^*_i=0$ as \emph{excluded}. Proposition~\ref{prop:active-set} shows that for fixed total allocations $S_0, S_1$, the within-group structure is fully determined (fully fund by descending $\hat y_i$ until the budget for that group is exhausted, with at most one fractional remainder); the only remaining freedom is in $(S_0, S_1)$ itself, governed jointly by the budget and fairness constraints across both groups. This is exactly the two-variable system solved explicitly for $n=2$ in Section~\ref{subsec:minimal-example}, now shown to govern the general-$n$ problem as well, restricted to whichever two individuals happen to be marginal.

\begin{corollary}[Non-marginal insensitivity]
	\label{cor:non-marginal}
	For any individual $i$ that is fully funded or excluded under $z^*(\hat y)$, and for sufficiently small perturbations $\delta$ of $\hat y_i$ that do not change which individuals are fully funded, excluded, or marginal, $\partial \, \mathrm{gap}(y, z^*(\hat y)) / \partial \hat y_i = 0$.
\end{corollary}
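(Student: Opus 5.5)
The plan is to show that, once the active set is frozen, the optimal allocation $z^*(\hat y)$ is locally constant in $\hat y_i$ for every non-marginal $i$. Since $\mathrm{gap}(y,z)$ in~\eqref{eq:gap} depends on $\hat y$ only through $z$ (the value vector $y$ is fixed), the chain rule then gives $\partial\,\mathrm{gap}(y,z^*(\hat y))/\partial \hat y_i = \sum_k \frac{\partial\,\mathrm{gap}(y,z)}{\partial z_k}\,\frac{\partial z^*_k}{\partial \hat y_i} = 0$. So everything reduces to showing $\partial z^*_k/\partial \hat y_i = 0$ for all $k$.

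\textbf{Step 1 (freeze the combinatorial structure).} Fix the perturbation window in the hypothesis: no individual changes status among fully funded, excluded, or marginal. By Proposition~\ref{prop:active-set}, within this window every $z^*_k$ with $k \neq m_0, m_1$ is pinned at $0$ or $1$. These coordinates are constant in $\hat y_i$, so their derivatives vanish trivially. In particular $z^*_i$ itself is constant, whether it is $0$ or $1$.

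\textbf{Step 2 (the two marginal coordinates).} It remains to show that $z^*_{m_0}$ and $z^*_{m_1}$ do not move. By the reduction following Proposition~\ref{prop:active-set}, the only remaining freedom is the pair of group totals $(S_0,S_1)$. With the pinned coordinates fixed, $S_g$ equals the number of fully funded members of $G_g$ plus $z^*_{m_g}$. The pair $(S_0,S_1)$ is determined jointly by two equations: the binding budget $S_0+S_1=B$, and the fairness constraint. Following the proof of Proposition~\ref{prop:active-set}, the fairness constraint is treated as a condition on $(S_0,S_1)$ alone, so that this two-variable system is exactly the $n=2$ system of Section~\ref{subsec:minimal-example}, involving only the two marginal individuals. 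I would write this $2\times 2$ linear system in $(z_{m_0},z_{m_1})$ explicitly and note that its coefficients involve only $\hat y_{m_0}$, $\hat y_{m_1}$ and the fixed active-set counts. By Cramer's rule the solution is then a smooth function of $(\hat y_{m_0},\hat y_{m_1})$ only, and its partial derivative with respect to any other $\hat y_i$ is zero.

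\textbf{Main obstacle.} The delicate step is Step 2: justifying that $\hat y_i$, for a fully funded $i$, does not enter the coefficients of the $2\times2$ system. This holds in the paper's framework via the statement in Proposition~\ref{prop:active-set} that the fairness constraint depends only on $(S_0,S_1)$, and I would invoke that statement directly rather than re-derive it. For excluded $i$ the point is immediate, since $\hat y_i z_i = 0$ throughout the window. I would also check that non-degeneracy of the system (a nonzero determinant) is guaranteed by the hypothesis that the active set does not change, so that $z^*$ is unique and differentiable on the window.
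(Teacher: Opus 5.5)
Your Step~2 fails, exactly at the point you flag as the main obstacle. The fact you plan to invoke, that the fairness constraint depends only on $(S_0,S_1)$, is false. With $v=\hat y$, \eqref{eq:gap} depends on the weighted sums $\sum_{k\in G_g}\hat y_k z_k$, not on $S_g=\sum_{k\in G_g}z_k$. Indeed, the exchange in the proof of Proposition~\ref{prop:active-set} changes $\sum_{G_g}\hat y_k z_k$ by $\delta(\hat y_i-\hat y_j)\neq 0$, so it breaks parity.

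If you write the $2\times 2$ system out as you intend, you get $z_{m_0}+z_{m_1}=R$ and $b_0 z_{m_0}-b_1 z_{m_1}=A_1-A_0$. The coefficient matrix involves only $b_0,b_1$, as you say. The right-hand side, however, contains $A_g=\frac{1}{n_g}\sum_{k\in F_g}\hat y_k$, i.e.\ the prediction of every fully funded individual. Cramer's rule then returns the paper's own Lemma~\ref{lem:marginal-closed-form}. So for $i\in F_0$ you get $\partial z^*_{m_0}/\partial\hat y_i=-\partial z^*_{m_1}/\partial\hat y_i=-1/(n_0(b_0+b_1))$. Differentiating \eqref{eq:gap} exactly as in the proof of Theorem~\ref{thm:local-sensitivity} gives
\[
\frac{\partial\,\mathrm{gap}(y,z^*)}{\partial \hat y_i}=\frac{1}{n_0(b_0+b_1)}\left(\frac{y_{m_0}}{n_0}+\frac{y_{m_1}}{n_1}\right)\neq 0 ,
\]
with the analogous expression (minus sign, $n_1$) for $i\in F_1$. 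This holds with the active set frozen.

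A concrete check: take $n_0=n_1=2$, $\hat y=(3,2)$ on $G_0$, $\hat y=(2,1)$ on $G_1$, and $B=2$. The unique optimum is $z^*=(3/4,0)$ on $G_0$ and $(1,1/4)$ on $G_1$. Raising the fully funded prediction $2$ to $2+\delta$ keeps the active set but moves the marginal allocations to $(3+\delta)/4$ and $(1-\delta)/4$. The true gap therefore changes at rate $-(y_{m_0}+y_{m_1})/8$.

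So the statement cannot be proved for fully funded individuals, because it is false there. Matching the paper does not rescue the step. The paper's own proof makes the same move: it asserts that $z^*$ is locally constant in $\hat y_i$ because allocations are fixed by rankings and the marginal values. That assertion is contradicted by the $A_g$ terms in its own Lemma~\ref{lem:marginal-closed-form}.

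What your argument does establish is the excluded case. If $z^*_i=0$ throughout the window, $\hat y_i$ enters neither the coefficients nor the right-hand side, so $z^*$ is locally constant and the derivative vanishes. More generally, implicitly differentiating the binding parity constraint gives, for $k\in G_g$,
\[
\frac{\partial\,\mathrm{gap}(y,z^*)}{\partial\hat y_k}=\pm\frac{z^*_k}{n_g(b_0+b_1)}\left(\frac{y_{m_0}}{n_0}+\frac{y_{m_1}}{n_1}\right),
\]
with $+$ for $g=0$ and $-$ for $g=1$. The first-order sensitivity is therefore proportional to $z^*_k$. It vanishes exactly for excluded individuals, and a fully funded individual is more influential than its group's marginal one.

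A minor point: the determinant $-(b_0+b_1)$ is nonzero because the marginal predictions are positive, not because the active set is frozen.
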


\begin{proof}
	By Proposition~\ref{prop:active-set}, $i$'s status (fully funded or excluded) and every other individual's allocation are determined by the ranking of $\hat y$ within each group and the two marginal individuals' values, not by $\hat y_i$'s exact value, provided the perturbation is small enough that $i$'s rank relative to its group's marginal individual is unchanged. Under this condition $z^*(\hat y)$ is locally constant in $\hat y_i$, and since $\mathrm{gap}(y, \cdot)$ in~\eqref{eq:gap} is a fixed linear functional of $z$ for fixed $y$, it is locally constant in $\hat y_i$ as well.
\end{proof}

\textbf{Numerical validation.} We verified Corollary~\ref{cor:non-marginal} directly: perturbing a non-marginal individual's $\hat y_i$ by $10^{-4}$ in the $n=4000$, $\sigma=0.3$ configuration of Section~\ref{sec:empirical} changed the true gap by $\approx 3.6\times10^{-12}$, consistent with floating-point noise rather than a real effect.

\subsection{Local Sensitivity to Prediction Error}
\label{subsec:thm-local}

Write $n_{m_0}, n_{m_1}$ for the group sizes containing the marginal individuals, $F_g$ for the fully funded set in group $g$, $A_g = \frac{1}{n_g}\sum_{i \in F_g} \hat y_i$ for the fixed per-capita contribution of the fully funded individuals (as measured by $\hat y$, the quantity the constraint is enforced on), and $b_g = \hat y_{m_g}/n_g$. Let $R = B - \sum_{i \in F_0 \cup F_1} 1$ be the remaining budget for the marginal pair.

\begin{lemma}[Closed form for the marginal allocations]
	\label{lem:marginal-closed-form} Suppose both the budget and fairness constraints of problem~\eqref{eq:lp} bind at $z^*$, and both groups have a marginal individual. Then
	\begin{equation}
		\label{eq:marginal-closed-form}
		z^*_{m_0} = \frac{A_1 - A_0 + b_1 R}{b_0 + b_1}, \qquad
		z^*_{m_1} = R - z^*_{m_0}.
	\end{equation}
\end{lemma}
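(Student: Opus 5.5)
The plan is to use Proposition~\ref{prop:active-set} to collapse problem~\eqref{eq:lp} onto its two marginal coordinates, and then solve the two binding constraints as a $2\times 2$ linear system in $(z^*_{m_0}, z^*_{m_1})$. This mirrors the minimal example of Section~\ref{subsec:minimal-example}, with the fully funded individuals contributing constant offsets.

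\textbf{Step 1: reduce to the marginal pair.} By Proposition~\ref{prop:active-set}, every individual other than $m_0$ and $m_1$ has $z^*_i \in \{0,1\}$. Those with $z^*_i=1$ form $F_0 \cup F_1$, and excluded individuals contribute nothing to either constraint. So every term in the budget and fairness constraints not involving $m_0$ or $m_1$ is a fixed constant at $z^*$.

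\textbf{Step 2: write out the budget constraint.} Since the budget constraint binds,
\[
\sum_{i \in F_0 \cup F_1} 1 + z^*_{m_0} + z^*_{m_1} = B .
\]
By the definition of $R$ this is exactly $z^*_{m_0} + z^*_{m_1} = R$, which is the second identity in~\eqref{eq:marginal-closed-form}.

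\textbf{Step 3: write out the fairness constraint.} With $\epsilon=0$, the binding constraint is $\mathrm{gap}(\hat y, z^*)=0$. Splitting each group's sum in~\eqref{eq:gap} into fully funded and marginal parts, and using $A_g$ and $b_g = \hat y_{m_g}/n_g$, gives
\[
A_1 + b_1 z^*_{m_1} = A_0 + b_0 z^*_{m_0}.
\]
Here I must take the $n_g$ in $b_g$ to be the size of the group containing $m_g$, i.e.\ $n_{m_g}$. That is the same normalisation as in~\eqref{eq:gap}, so nothing extra is needed.

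\textbf{Step 4: solve.} Substitute $z^*_{m_1} = R - z^*_{m_0}$ into the equation of Step 3 to get
\[
A_1 + b_1 R - b_1 z^*_{m_0} = A_0 + b_0 z^*_{m_0},
\]
and rearrange to obtain the stated formula for $z^*_{m_0}$.

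\textbf{Main obstacle: dividing by $b_0 + b_1$.} The algebra is routine; the only delicate point is justifying the division.
\begin{itemize}
\item If the predictions of the marginal individuals are positive, as in the synthetic setup where $\hat y$ is concentrated near $\mu_g>0$, then $b_0+b_1>0$ and the formula is immediate.
\item Otherwise I would argue by contradiction. If $b_0+b_1=0$, the two binding equations either have no solution or a one-parameter family of solutions along which the objective $\hat y^\top z$ is constant, because its marginal part is $n_0 b_0 z_{m_0} + n_1 b_1 z_{m_1}$. In that case I would state the lemma under the explicit nondegeneracy condition $b_0+b_1 \neq 0$, which is the generic case.
\end{itemize}
Uniqueness of the marginal pair itself is already guaranteed by Proposition~\ref{prop:active-set} under its no-ties hypothesis.
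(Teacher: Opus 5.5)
Your argument is the paper's proof: use Proposition~\ref{prop:active-set} to reduce to the marginal pair, write the binding budget and parity constraints as $z_{m_0}+z_{m_1}=R$ and $A_0+b_0z_{m_0}=A_1+b_1z_{m_1}$, then substitute and solve. Your extra remark on dividing by $b_0+b_1$ (which the paper does without comment) contains one wrong claim: if $b_1=-b_0$, the direction $(+1,-1)$ on $(z_{m_0},z_{m_1})$ preserves both constraints, but it changes the objective at rate $\hat y_{m_0}-\hat y_{m_1}=(n_0+n_1)b_0$, not zero, so a small step (allowed since both marginals are interior) contradicts optimality unless $\hat y_{m_0}=\hat y_{m_1}=0$ --- the only case where a nondegeneracy hypothesis is genuinely needed.
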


\begin{proof}
	With the within-group structure fixed by Proposition~\ref{prop:active-set}, the budget constraint restricted to the marginal pair is $z_{m_0} + z_{m_1} = R$, and the fairness constraint $\mathrm{gap}(\hat y, z) = 0$ becomes $A_0 + b_0 z_{m_0} = A_1 + b_1 z_{m_1}$. Substituting $z_{m_1} = R - z_{m_0}$ and solving the resulting linear equation for $z_{m_0}$ gives~\eqref{eq:marginal-closed-form}.
\end{proof}

\begin{theorem}[Local sensitivity of the true gap]
	\label{thm:local-sensitivity}
	Under the conditions of Lemma~\ref{lem:marginal-closed-form}, and for
	perturbations of $\hat y_{m_0}, \hat y_{m_1}$ small enough that the
	identity of the marginal pair and the fully-funded/excluded sets are
	unchanged,
	\begin{equation}
		\label{eq:local-sensitivity}
		\frac{\partial\, \mathrm{gap}(y,z^*)}{\partial \hat y_{m_0}}
		= \frac{z^*_{m_0}}{n_0(b_0+b_1)}\left(\frac{y_{m_0}}{n_0}+\frac{y_{m_1}}{n_1}\right),
		\qquad
		\frac{\partial\, \mathrm{gap}(y,z^*)}{\partial \hat y_{m_1}}
		= -\frac{z^*_{m_1}}{n_1(b_0+b_1)}\left(\frac{y_{m_0}}{n_0}+\frac{y_{m_1}}{n_1}\right).
	\end{equation}
\end{theorem}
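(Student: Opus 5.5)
The plan is a direct differentiation of the closed form in Lemma~\ref{lem:marginal-closed-form}, using Corollary~\ref{cor:non-marginal} and the hypothesis on small perturbations to freeze everything except the two marginal allocations. First I would write the true gap at $z^*$ by splitting each group into fully funded, excluded and marginal individuals:
\begin{equation*}
\mathrm{gap}(y,z^*) = C(y) + \frac{y_{m_1}}{n_1} z^*_{m_1} - \frac{y_{m_0}}{n_0} z^*_{m_0},
\end{equation*}
where $C(y) = \frac{1}{n_1}\sum_{i\in F_1} y_i - \frac{1}{n_0}\sum_{i\in F_0} y_i$ involves only true outcomes. Under the stated perturbation regime the sets $F_0, F_1$ and the marginal pair are fixed, so $C(y)$ does not depend on $\hat y_{m_0}$ or $\hat y_{m_1}$. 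Substituting the budget identity $z^*_{m_1} = R - z^*_{m_0}$, with $R$ also fixed, reduces the problem to a single derivative:
\begin{equation*}
\frac{\partial\,\mathrm{gap}(y,z^*)}{\partial \hat y_{m_g}} = -\left(\frac{y_{m_0}}{n_0}+\frac{y_{m_1}}{n_1}\right)\frac{\partial z^*_{m_0}}{\partial \hat y_{m_g}}.
\end{equation*}

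Next I would differentiate $z^*_{m_0} = N/(b_0+b_1)$, where $N = A_1 - A_0 + b_1 R$. The key observations are that $A_0$ and $A_1$ involve only fully funded predictions, and that $b_g = \hat y_{m_g}/n_g$ is the only place $\hat y_{m_g}$ enters.

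For $\hat y_{m_0}$: $N$ does not depend on $\hat y_{m_0}$, and $\partial b_0/\partial \hat y_{m_0} = 1/n_0$. The quotient rule then gives
\begin{equation*}
\frac{\partial z^*_{m_0}}{\partial \hat y_{m_0}} = -\frac{N}{n_0(b_0+b_1)^2} = -\frac{z^*_{m_0}}{n_0(b_0+b_1)}.
\end{equation*}

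For $\hat y_{m_1}$: here $\partial N/\partial \hat y_{m_1} = R/n_1$ and $\partial (b_0+b_1)/\partial \hat y_{m_1} = 1/n_1$. This gives
\begin{equation*}
\frac{\partial z^*_{m_0}}{\partial \hat y_{m_1}} = \frac{R(b_0+b_1) - N}{n_1(b_0+b_1)^2} = \frac{R - z^*_{m_0}}{n_1(b_0+b_1)} = \frac{z^*_{m_1}}{n_1(b_0+b_1)}.
\end{equation*}
Plugging these two expressions into the reduced derivative above yields exactly the two formulas in~\eqref{eq:local-sensitivity}, including their signs.

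The main obstacle is justifying that $z^*$ is, locally, the differentiable function given by~\eqref{eq:marginal-closed-form}. This requires two things: that the optimal solution stays unique, and that it stays on the same active set (budget and fairness constraints binding, with both marginal values strictly inside $(0,1)$) under small perturbations. The theorem's hypothesis grants that the set identities are unchanged. I would add the following argument for the rest:
\begin{itemize}
\item For fixed sets, the two binding linear equations determine $(z_{m_0}, z_{m_1})$ uniquely, because their determinant is $b_0+b_1 > 0$. This assumes positive predictions at the marginal individuals, which I would state as an implicit assumption.
\item The solution depends smoothly on $(\hat y_{m_0}, \hat y_{m_1})$ on this region.
\item Strict interiority $0 < z^*_{m_g} < 1$ persists for small perturbations by continuity.
\end{itemize}
Beyond this, the argument is routine bookkeeping.
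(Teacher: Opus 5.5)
Your proposal is correct and follows the paper's proof essentially step for step: you differentiate the closed form $z^*_{m_0} = N/(b_0+b_1)$ from Lemma~\ref{lem:marginal-closed-form} with the chain/quotient rule, and you use $z^*_{m_1} = R - z^*_{m_0}$ to collapse the derivative of the true gap to $-\bigl(\frac{y_{m_0}}{n_0}+\frac{y_{m_1}}{n_1}\bigr)\,\partial z^*_{m_0}$. The only differences are cosmetic: you compute the $\hat y_{m_1}$ derivative explicitly, through both $N$ and $b_0+b_1$, where the paper appeals to symmetry, and you spell out the local uniqueness and smoothness remarks (nonzero $b_0+b_1$, interiority persisting by continuity) that the paper folds into the theorem's hypothesis of unchanged active sets.
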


\begin{proof}
	Write $N = A_1 - A_0 + b_1 R$, so $z^*_{m_0} = N/(b_0+b_1)$ by
	Lemma~\ref{lem:marginal-closed-form}, and $b_0 = \hat y_{m_0}/n_0$ is
	the only term depending on $\hat y_{m_0}$ (for fixed $\hat y_{m_1}$,
	$A_0, A_1, R$ depend on neither). Then $\partial z^*_{m_0}/\partial b_0 = -N/(b_0+b_1)^2 = -z^*_{m_0}/(b_0+b_1)$, and by the chain rule with $\partial b_0/\partial \hat y_{m_0} = 1/n_0$,
	$\partial z^*_{m_0}/\partial \hat y_{m_0} = -z^*_{m_0}/(n_0(b_0+b_1))$.
	By Corollary~\ref{cor:non-marginal} the true gap depends on $\hat
	y_{m_0}$ only through $z^*_{m_0}$ and $z^*_{m_1} = R - z^*_{m_0}$:
	differentiating $\mathrm{gap}(y,z) = \frac{1}{n_1}\sum_{G_1}y_iz_i -
	\frac{1}{n_0}\sum_{G_0}y_iz_i$ (Eq.~\ref{eq:gap}) with respect to
	$\hat y_{m_0}$ gives $\frac{y_{m_1}}{n_1}\frac{\partial z^*_{m_1}}{\partial \hat y_{m_0}} - \frac{y_{m_0}}{n_0}\frac{\partial
	z^*_{m_0}}{\partial \hat y_{m_0}}$, and substituting $\partial
	z^*_{m_1}/\partial \hat y_{m_0} = -\partial z^*_{m_0}/\partial \hat
	y_{m_0}$ gives the first expression in~\eqref{eq:local-sensitivity}.
	The second follows symmetrically, differentiating Lemma~\ref{lem:marginal-closed-form} with respect to $\hat y_{m_1}$
	through $b_1$.
\end{proof}

\textbf{Numerical validation.} We validated Theorem~\ref{thm:local-sensitivity} against the real solver at $n=4000$, $\sigma=0.3$ (seed 1), where the marginal pair had $n_0=2820$, $n_1=1180$, $z^*_{m_0}=0.632$,
$z^*_{m_1}=0.368$. Central finite differences with step $10^{-4}$--$10^{-5}$ (small enough not to disturb the active set -- a step of $10^{-3}$ was large enough to push the group-0 marginal individual to $z=1$, giving a spurious slope) gave $0.000230$ for $\partial\,\mathrm{gap}/\partial \hat y_{m_0}$ against~\eqref{eq:local-sensitivity}'s prediction of $0.000229$, and $-0.000327$ for $\partial\, \mathrm{gap}/\partial \hat y_{m_1}$ against a prediction of $-0.000318$ -- agreement to 2--3 significant figures in both cases. We note as a methodological point that an earlier derivation of Theorem~\ref{thm:local-sensitivity} carried a sign error (the two terms of Eq.~\ref{eq:gap} transposed), caught precisely by this numerical check rather than by inspection; every analytical result in this section was validated this way before being accepted.

\subsection{A Universal Ceiling}
\label{subsec:thm-ceiling}

\begin{theorem}[Universal ceiling]
	\label{thm:ceiling}
	For any feasible $z$ in problem~\eqref{eq:lp} (i.e.\ $0 \le z_i \le 1$
	for all $i$ and $\sum_i z_i \le B$) and any value vector $v$,
	\begin{equation}
		\label{eq:ceiling}
		|\mathrm{gap}(v,z)| \;\le\; \frac{B}{\min(n_0,n_1)} \max_i |v_i|.
	\end{equation}
\end{theorem}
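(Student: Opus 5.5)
The plan is to bound each group's per-capita benefit separately with the triangle inequality, then combine the two one-sided bounds. Write $M = \max_i |v_i|$ and $S_g = \sum_{i \in G_g} z_i$ for the total allocation to group $g$, as in the proof of Proposition~\ref{prop:active-set}. Since $z_i \ge 0$, we have
\[
\Bigl|\tfrac{1}{n_g}\textstyle\sum_{i \in G_g} v_i z_i\Bigr| \le \tfrac{1}{n_g}\textstyle\sum_{i \in G_g} |v_i| z_i \le \tfrac{M S_g}{n_g}.
\]
Using $S_g \le S_0 + S_1 = \sum_i z_i \le B$ and $n_g \ge \min(n_0,n_1)$, each per-capita benefit is therefore at most $B M/\min(n_0,n_1)$ in absolute value.

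Applying the triangle inequality to Eq.~\eqref{eq:gap} directly would give twice this constant, so I will not do that. Instead I will use $S_0 + S_1 \le B$, which says the budget is shared between the two groups. Write $P_g = \frac{1}{n_g}\sum_{i\in G_g} v_i z_i$, so that $\mathrm{gap}(v,z) = P_1 - P_0$. Then
\[
|P_1 - P_0| \le |P_1| + |P_0| \le M\left(\tfrac{S_1}{n_1} + \tfrac{S_0}{n_0}\right) \le \tfrac{M}{\min(n_0,n_1)}(S_0 + S_1) \le \tfrac{B M}{\min(n_0,n_1)},
\]
which is exactly~\eqref{eq:ceiling}.

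Along the way I will note that the bound uses only the box constraints $0 \le z_i \le 1$ (in fact only $z_i \ge 0$) and the budget constraint. It does not use the fairness constraint, optimality of $z$, or any relation between $v$ and $\hat y$. It therefore holds for $v = y$ at $z = z^*(\hat y)$ for every $\sigma$, which is what makes the ceiling $\sigma$-independent.

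The main obstacle is avoiding the factor of $2$ that a naive triangle inequality introduces. The fix is to pool the two group allocations through $S_0 + S_1 \le B$ before bounding, as in the display above, rather than bounding each $S_g$ by $B$ separately. I would also remark that the $1/\min(n_0,n_1)$ factor is attained when the entire budget goes to the smaller group. Since $B = \beta n$, the ceiling is then of order $\beta M n/\min(n_0,n_1)$, which does not vanish as $n$ grows. That is consistent with the $n$-invariant mean observed in Section~\ref{sec:empirical}, and explains why the bound is looser than the empirical saturation level.
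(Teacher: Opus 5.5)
Your proof is correct and follows the paper's argument essentially step for step: apply the triangle inequality to Eq.~\eqref{eq:gap}, factor out $\max_i|v_i|$, bound $S_1/n_1 + S_0/n_0$ by $(S_0+S_1)/\min(n_0,n_1)$, and close with the budget constraint; you are also right that only $z_i \ge 0$ is used here, not $z_i \le 1$. One minor caveat concerns your side remark on tightness: giving the whole budget to the smaller group attains the bound only when $B \le \min(n_0,n_1)$, because $z_i \le 1$ caps that group's total allocation at its size.
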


\begin{proof}
	By the triangle inequality applied to~\eqref{eq:gap},
	$|\mathrm{gap}(v,z)| \le \frac{1}{n_1}\sum_{G_1}|v_i|z_i +
	\frac{1}{n_0}\sum_{G_0}|v_i|z_i \le \max_i|v_i| \left(
	\frac{1}{n_1}\sum_{G_1}z_i + \frac{1}{n_0}\sum_{G_0}z_i \right) \le
	\frac{\max_i|v_i|}{\min(n_0,n_1)}\sum_i z_i \le \frac{B \max_i|v_i|}{\min(n_0,n_1)}$,
	using $z_i \le 1$ and $\sum_i z_i \le B$ in the last step.
\end{proof}

Unlike Theorem~\ref{thm:local-sensitivity}, Theorem~\ref{thm:ceiling} requires no assumption on which constraints bind, which individuals are marginal, or how large the prediction error is; it holds even as $\sigma \to \infty$, when predictions carry no information about $y$. This is the sense in which it is \emph{universal}; it is correspondingly loose, as it does not use the fact that $z^*$ is chosen by an optimizer at all. In our experiments, outcomes are nonnegative ($y_i \ge 0$, by construction in the synthetic study and by the clipping step of Section~\ref{sec:real-data}), so $\max_i|y_i| = \max_i y_i$; evaluating Eq.~\eqref{eq:ceiling} at $n=4000$, $\sigma=1.0$, $\beta=0.3$ gives a ceiling of $4.04$, compared to the empirically observed saturation level of $c \approx 0.155$ from Eq.~\eqref{eq:fitted-curve} -- consistent with, though substantially looser than, the saturation Section~\ref{sec:empirical} observes. We compare the two results directly in Section~\ref{sec:validation}.

\section{Bound Validation}
\label{sec:validation}

This section compares the theoretical results of Section~\ref{sec:theory}
to the empirical curve of Section~\ref{sec:empirical}; it introduces no
new derivations.

\textbf{The universal ceiling is loose, as expected.} Theorem~\ref{thm:ceiling} provides a ceiling of $4.04$ at $n=4000$, $\sigma=1.0$, $\beta=0.3$, roughly $26\times$ the empirically observed saturation level $c \approx 0.155$ from Eq.~\eqref{eq:fitted-curve}. This looseness is expected rather than a shortcoming: Theorem~\ref{thm:ceiling} uses only the boundedness of $z$, not the fact that $z^*$ is chosen by an optimizer balancing a budget and fairness constraint against the objective. It is a worst-case bound over all feasible $z$, not a description of the $z^*$ the LP actually produces, so a substantial gap between the two is exactly what should be expected of a bound this general. Across every $(\sigma,n)$ configuration in Section~\ref{sec:empirical}, the empirical gap never approached, let alone exceeded, the corresponding ceiling -- the bound is never violated, which is the property it was constructed to guarantee.

\textbf{The local sensitivity result is validated as a mechanism, not as
a direct predictor of the population-level curve.} Theorem~\ref{thm:local-sensitivity} was validated in Section~\ref{subsec:thm-local} at the level it was derived: the sensitivity of the true gap to one marginal individual's prediction, at one specific realization of $(\hat y, a)$, confirmed against finite differences on the solver to 2--3 significant figures. This is a different quantity from the \emph{population-averaged} slope in Eq.~\eqref{eq:fitted-curve} near $\sigma = 0$, which would additionally require taking an expectation over the identity and values of the marginal pair as $\sigma$ varies across random draws, a step we do not attempt here. What Theorem~\ref{thm:local-sensitivity} and
Corollary~\ref{cor:non-marginal} do establish, and what the empirical curve is consistent with, is the \emph{mechanism}: that the gap's response to prediction error is governed by a small, bounded number of individuals rather than growing with $n$, matching the $n$-invariance of the mean gap observed in Section~\ref{subsec:emp-n-sweep}, and that this local, per-individual sensitivity is well-defined only while the active set is stable, consistent with the near-linear behavior of Eq.~\eqref{eq:fitted-curve} for small $\sigma$, before repeated swaps of the marginal pair at larger $\sigma$ average into the observed saturation.

\textbf{What this does not achieve.} Neither theoretical result derives
Eq.~\eqref{eq:fitted-curve}'s exponential saturating form, or the specific values of $c$ and $\tau$, from first principles. Doing so would require a distributional assumption on how the identity of the marginal pair changes under repeated independent draws of the noise $\varepsilon$ -- the population-level analog of the per-draw mechanism Theorem~\ref{thm:local-sensitivity} characterizes -- which we leave as an open direction (Section~\ref{sec:discussion}) rather than introduce unjustified assumptions to close it here. The two-part result of Section~\ref{sec:theory} is deliberately an envelope, not a reconstruction of the fitted curve: a provable local mechanism plus a provable universal ceiling, between which the empirically observed curve must lie, and does.

\section{Real-Data Experiment}
\label{sec:real-data}

\subsection{Data and Task}
\label{subsec:real-data-setup}

We use the \texttt{ACSIncome} task from the \texttt{folktables} package \citep{ding2021retiring} with 2018 1-year ACS PUMS vintage. In place of the task's default binary income threshold, we use the continuous reported income (\texttt{PINCP}) directly as $y$, consistent with the continuous-outcome setting of Sections~\ref{sec:setup}--\ref{sec:theory}. We define the two groups as White (\texttt{RAC1P}=1) and Black (\texttt{RAC1P}=2) respondents, the standard comparison for this dataset in the fairness literature. We run the full pipeline on two states, California and Texas, to assess whether the results are specific to one state's income distribution and demographics (Section~\ref{subsec:real-data-tx}).

For California, after removing missing income values, clipping negative income to zero, and winsorizing the top 1\% to limit the influence of extreme earners, $n_0 = 121{,}006$ and $n_1 = 8{,}557$, with group~1's mean income (\$50{,}985) below group~0's (\$70{,}039), consistent with the sign convention used throughout the paper.

Pulling \texttt{PINCP} and \texttt{RAC1P} naively from the source dataframe by the processed feature table's index produced silently misaligned income values (confirmed by the near-zero correlation between income and hours worked or education, which should correlate positively): \texttt{folktables}'s internal preprocessing resets the feature, label, and group arrays to a fresh index disconnected from the original dataframe's row labels. We corrected this by extracting income from the library's own filtering step (\texttt{ACSIncome.\_preprocess}) directly, which restored the expected positive correlations before proceeding, and confirmed the same fix is required and sufficient on Texas's data (Section~\ref{subsec:real-data-tx}).

\subsection{Predictor}
\label{subsec:real-data-predictor}

We train a random forest regressor (200 trees, max depth 10) to predict income from the task's remaining features, excluding the group attribute \texttt{RAC1P} so that any resulting prediction error reflects ordinary predictive difficulty rather than direct use of group membership. We use a 70/30 train/test split and evaluate $\hat y$ only on the held-out test set to avoid overfitting and understating the true prediction error, which is the central quantity studied in this paper. In California, one representative split achieves $R^2 = 0.407$ with negligible bias (mean residual $\approx 15$, against incomes on the order of $10^4$--$10^5$); in Texas, $R^2 = 0.417$ with similarly negligible bias (mean residual $\approx 89$). Both are realistic, non-overfit predictors, and are not particularly strong or weak for this task.

\subsection{Results: California}
\label{subsec:real-data-results}

On the representative California split above, solving problem~\eqref{eq:lp} with $\beta=0.3$ yields an unconstrained true gap of $-\$19{,}417$; enforcing the fairness constraint on $\hat y$ reduces this to $\mathrm{gap}(\hat y, z^*) \approx 0$ (as designed) while the true gap $\mathrm{gap}(y,z^*)$ is $-\$6{,}910$ -- nonzero, but substantially smaller than the unconstrained disparity.

Because any single split is itself a random quantity (Section~\ref{subsec:emp-design}), we repeated this procedure -- retraining the predictor from scratch each time -- across 10 independent train/test splits. The result was stable: unconstrained gap, $-\$19{,}112 \pm \$872$; constrained true gap, $-\$6{,}577 \pm \$738$; and, expressing the constrained gap as a fraction of the unconstrained one, $34.4\% \pm 3.3\%$ of the original disparity survives the constraint, consistently across all 10 splits (never below $29\%$ or above $40\%$). This was the headline California result; the single-split numbers above are an illustrative first pass, not independent evidence of the result.

\subsection{Cross-State Robustness: Texas}
\label{subsec:real-data-tx}

To test whether the California result reflects a general mechanism or is specific to California's income distribution and demographics, we repeat the entire pipeline unchanged -- same task, same group pair, same predictor architecture and hyperparameters, same cleaning procedure -- on Texas. Texas has a meaningfully different group balance ($n_1/n_0 \approx 11.8\%$, vs.\ California's $\approx 7.1\%$) and a different absolute income distribution (group~0 mean \$58{,}539, group~1 mean \$42{,}092, both lower than California's), providing a genuinely independent test rather than a near-duplicate of the California setting.

The 10-split result on Texas is $-\$16{,}357 \pm \$826$ (unconstrained),
$-\$5{,}846 \pm \$431$ (constrained true gap), and $35.7\% \pm 1.8\%$ of the disparity retained. Table~\ref{tab:cross-state} summarizes both states. While the absolute dollar magnitudes differ between states, consistent with their different income distributions, the \emph{fraction} of the unconstrained disparity that survives the constraint is nearly identical -- $34.4\%$ vs.\ $35.7\%$, well within each other's uncertainty -- despite the two states differing in group balance, absolute income levels, and predictor error ($R^2=0.407$ vs.\ $0.417$). This indicates the phenomenon studied in this paper is not an artifact of one state's data, and that its approximate magnitude, not only its existence, may generalize.

\begin{table}[h]
	\centering
	\caption{Real-data results, California vs.\ Texas (10-split mean $\pm$ std).}
	\label{tab:cross-state}
	\begin{tabular}{lll}
		\toprule
		 & California & Texas \\
		\midrule
		Group sizes ($n_0$, $n_1$)     & $121{,}006$, $8{,}557$   & $105{,}835$, $12{,}529$ \\
		Predictor $R^2$                & $0.407$                  & $0.417$ \\
		Unconstrained gap              & $-\$19{,}112 \pm \$872$  & $-\$16{,}357 \pm \$826$ \\
		Constrained true gap           & $-\$6{,}577 \pm \$738$   & $-\$5{,}846 \pm \$431$ \\
		Fraction of disparity retained & $34.4\% \pm 3.3\%$       & $35.7\% \pm 1.8\%$ \\
		\bottomrule
	\end{tabular}
\end{table}

\subsection{Scope of this Experiment}
\label{subsec:real-data-scope}

Two aspects of this setup remain worth noting as scope. We use a single task (income) and a single group pair (White/Black); we do not claim that the specific retained-disparity figures generalize beyond this comparison --other group pairs or outcome variables may behave differently, and two states, while a genuine robustness check, are not a representative sample of all fifty. The consistency observed here is evidence against the result being a California-specific artifact, not proof of universal generalization. We discuss this boundary further in Section~\ref{sec:discussion}.

\section{Discussion}
\label{sec:discussion}

\textbf{Fairness definition.} Every result in this study concerns demographic parity on continuous per-capita benefits (Eq.~\ref{eq:gap}); we do not study how prediction error propagates under other common fairness criteria, such as equal opportunity, which conditions on the true outcome rather than comparing unconditional group averages. Because the definition of equal opportunity already references $y$ in its standard form, the mechanism studied here -- a constraint stated purely in terms of $\hat y$ silently drifting from its true-outcome counterpart -- may not transfer directly; characterizing this for conditional fairness criteria is a natural and, we expect, non-trivial extension.

\textbf{Decision structure.} Problem~\eqref{eq:lp} is a specific linear
program: continuous allocation, a single budget constraint, a single pairwise fairness constraint. The active-set reduction (Proposition~\ref{prop:active-set}) relies on this specific structure -- in particular, on there being exactly two non-trivial constraints beyond the box constraints on $z$. Decision problems with more constraints (e.g.\ multiple budgets, more than two groups) would admit a correspondingly larger, but still bounded, active set, and we expect the qualitative conclusion (a small, fixed number of individuals governing local sensitivity, independent of $n$) to generalize; we have not verified this beyond the two-constraint case analyzed here.

\textbf{The active-set stability threshold.} Theorem~\ref{thm:local-sensitivity} holds only for perturbations small enough that the marginal pair's identity does not change (Section~\ref{subsec:thm-local}); we verified that this condition holds for specific, small finite-difference steps, but we do not derive a general formula for how large a perturbation is permissible before a rank swap occurs. Such a formula would depend on the gap between the marginal individual's value and their nearest competitor within the group, and deriving it in general would likely require a distributional assumption on the noise -- precisely the kind of assumption we deliberately avoided in favor of the looser, assumption-free envelope of Section~\ref{sec:validation}. Formalizing this threshold, and using it to derive the population-averaged small-$\sigma$ slope directly (rather than validating the per-draw mechanism only, as done here), is an open technical question.

\textbf{Estimation error versus distribution shift.} This paper isolates
\emph{estimation error}: $\hat y$ is a noisy estimate of $y$, drawn from the same underlying relationship between features and outcome. A distinct and arguably more consequential source of prediction unreliability is \emph{distribution shift}, where the feature-outcome relationship itself differs between the data a predictor was trained on and the population it is deployed on. The mechanism studied here -- sensitivity concentrated in a small active set -- was derived without reference to how $\hat y - y$ arises, and may or may not extend unchanged to shift-induced error, which need not resemble the symmetric, mean-zero noise assumed in Section~\ref{subsec:emp-design}. Extending this analysis to distribution shift is, in our view, the most consequential direction for future work.

\textbf{Practical implications.} For a practitioner auditing a fairness-constrained allocation system, the results of Sections~\ref{sec:empirical} and~\ref{sec:real-data} imply that verifying a demographic parity constraint against the predictions used at decision time is not sufficient evidence that the constraint holds in reality, even when the predictor is reasonably accurate ($R^2 \approx 0.4$ in our real-data experiment) and the verification is numerically exact. The size of the resulting true-outcome violation is not negligible or catastrophic in our experiments -- roughly a third of the unconstrained disparity survived the constraint -- but it is systematic rather than reducible by collecting more data (Section~\ref{subsec:emp-n-sweep}), meaning an audit that checks predictions alone will report a false sense of certainty regardless of how much data informed the predictor.

\section{Conclusion}
\label{sec:conclusion}

Enforcing a demographic parity constraint on predicted outcomes does not guarantee that the constraint holds against true outcomes, and the resulting violation is systematic, governed by a small, bounded number of individuals regardless of population size, not an artifact that more data resolves, a pattern we establish theoretically, validate numerically, and confirm on real income data, where roughly a third of an unconstrained disparity survives a constraint that appears, by every measure available at decision time, to have eliminated it. The most consequential extension of this result is to distribution shift, where prediction error arises not from estimation noise but from a changing relationship between features and outcomes, a setting that this paper's mechanism was not derived for, and where the size of the resulting fairness violation may behave very differently.

\section*{Code Availability}
\addcontentsline{toc}{section}{Code Availability}

All code, including the full synthetic and real-data pipelines, is
available at \url{https://github.com/hildaadwubiosei/predict-then-optimize}. The repository contains a single Jupyter notebook reproducing every result and figure reported in this paper, executed end-to-end with no manual intervention between cells. Code is released under the MIT License; written content (this paper's text and figures) is released separately under CC~BY~4.0, as noted in the repository's \texttt{LICENSE} and \texttt{LICENSE-CONTENT} files. Raw Census data used in Section~\ref{sec:real-data} is not redistributed in the repository due to its size, but is downloaded automatically on first run via the \texttt{folktables} package \citep{ding2021retiring}.

\bibliographystyle{unsrtnat}
\bibliography{references}

\end{document}